\newtheorem{definition}{Definition}[section]
\newtheorem{theorem}[definition]{Theorem}
\crefname{definition}{Definition}{Definitions}
\crefname{theorem}{Theorem}{Theorems}
\crefname{lemma}{Lemma}{Lemmas}
\crefname{corollary}{Corollary}{Corollaries}
\crefname{example}{Example}{Examples}
\acrodef{RV}[RV]{random variable}
\acrodef{VE}[VE]{variable elimination}
\acrodef{BE}[BE]{bucket elimination}
\acrodef{BP}[BP]{belief propagation}
\acrodef{JT}[JT]{junction tree algorithm}
\acrodef{TN}[TN]{tensor network}
\acrodef{BTN}[BTN]{base tensor network}
\acrodef{BT}[BT]{base tensor}
\acrodef{DT}[DT]{default tensor}
\acrodef{PGM}[PGM]{probabilistic graphical model}
\acrodef{PS}[PS]{parametric structure}
\acrodef{TD}[TD]{tensor decomposition}
\acrodef{EFG}[EFG]{extended factor graph}
\acrodef{EF}[EF]{extended factor}
\acrodef{ARV}[ARV]{artificial random variable}
\acrodef{SF}[SF]{split factor}
\acrodef{MT}[MT]{modification tensor}
\acrodef{RL}[RL]{reinforcement learning}
\acrodef{FG}[FG]{factor graph}
\acrodef{MDP}[MDP]{Markov decision process}
\acrodef{SRV}[SRV]{symmetric RV}
\title{Combining Local Symmetry Exploitation and Reinforcement Learning for Optimised Probabilistic Inference -- A Work In Progress}
\author{
Sagad Hamid$^1$
\and
Tanya Braun$^1$
\affiliations
$^1$University of Münster
\emails
\{sagad.hamid, tanya.braun\}@uni-muenster.de
}
\begin{document}

\maketitle

\begin{abstract}
Efficient probabilistic inference by variable elimination in graphical models requires an optimal elimination order. However, finding an optimal order is a challenging combinatorial optimisation problem for models with a large number of random variables. Most recently, a reinforcement learning approach has been proposed to find efficient contraction orders in tensor networks. Due to the duality between graphical models and tensor networks, we adapt this approach to probabilistic inference in graphical models. Furthermore, we incorporate structure exploitation into the process of finding an optimal order. Currently, the agent's cost function is formulated in terms of intermediate result sizes which are exponential in the number of indices (i.e., random variables). We show that leveraging specific structures during inference allows for introducing compact encodings of intermediate results which can be significantly smaller. By considering the compact encoding sizes for the cost function instead, we enable the agent to explore more efficient contraction orders. The structure we consider in this work is the presence of local symmetries (i.e., symmetries within a model's factors).
\end{abstract}

\section{Introduction}
Probabilistic inference tasks, such as querying the marginals of \acp{RV} or the partition function, show up consistently in various research areas including computer vision \cite{wang2013markov}, robotics \cite{dellaert2017factor}, natural language processing \cite{chater2006probabilistic}, and causality \cite{darwiche2022causal}. \Acp{PGM} take advantage of (conditional) independencies among \acp{RV} to compactly encode joint probability distributions as factorisations \cite{koller2009probabilistic}. Classical inference algorithms like \ac{VE} \cite{zhang1996exploiting} exploit these independencies to perform more efficient inference in \acp{PGM}. For models with a large number of \acp{RV}, efficiency heavily relies on an optimal elimination order as it determines the intermediate result sizes (and thus the computational complexity) required during inference. Since finding an optimal elimination order is a challenging combinatorial optimisation problem, heuristics are usually used instead \cite{koller2009probabilistic}.

In the research field of \ac{TN} contraction, an analogous problem exists where the goal is to find an optimal contraction order. Here, a \ac{RL} approach was recently introduced that formulates the problem as a \ac{MDP} to find efficient contraction orders in \acp{TN} \cite{meirom2022optimizing}. At the core, an \ac{RL} agent learns to select contraction orders s.t.\ the cumulative contraction costs are minimised. Similar to the case for probabilistic inference, each contraction cost is related to the size of the intermediate tensor that is created during contraction. We refer to the paper for more details on the setting.

\paragraph{Contributions}
We adapt the aforementioned \ac{RL} approach to probabilistic inference in \acp{PGM} since there exists a duality between \acp{PGM} and \acp{TN} \cite{robeva2019duality}. Further, we extend the approach by incorporating structure exploitation into the process of finding an optimal order. When dealing with large \acp{PGM}/\acp{TN}, it is often the case that some factors/tensors yield specific structures in their potentials/values. However, currently the agent's cost function does not consider these structures since it is solely defined in terms of intermediate result sizes which are exponential in the respective number of \acp{RV}/indices. Crucially, many structures allow for introducing compact encodings for intermediate results that can be significantly smaller. Considering the compact encoding sizes for the cost function instead allows the agent to explore new orders which can be more efficient. The structure we consider in this work is the presence of local symmetries, i.e., symmetries among \acp{RV} within a model's factors. To enable incorporating the compact encodings of local symmetries into the cost function, we show how to exploit local symmetries during inference while preserving correctness.

\paragraph{Structure Exploitation \& Efficient Inference}
In the context of \acp{PGM}, structure exploitation has been successfully incorporated in many different ways for more efficient inference. The research field of statistical relational artificial intelligence mainly deals with the exploitation of global symmetries (i.e., symmetries among \acp{RV} in the underlying joint probability distribution) and isomorphisms in a relational context. Several lifted models have been introduced to compactly encode these symmetries \cite{kimmig2015lifted}. Lifted inference algorithms perform inference on lifted models to exploit symmetries during inference, allowing for drastically reducing computational complexity \cite{niepert2014tractability,ludtke2018state,holtzen2020generating}. In other contexts, various structures and encodings have been considered for more efficient inference including tensor structures \cite{obermeyer2019tensor}, Fourier representations \cite{DBLP:conf/icml/XueEBGS16}, causal mechanisms \cite{DBLP:conf/ecai/Darwiche20,darwiche2022causal}, polynomial representations \cite{DBLP:conf/nips/WinnerS16,DBLP:conf/icml/WinnerSS17}, and circuit-based representations \cite{choi2020probabilistic}.

\section{Preliminaries}
\label{section:background}

In this section, we first specify notations and introduce fundamentals on \acp{FG} as a general \ac{PGM} formalism and exact inference by \ac{VE}. \acp{RV} are assumed to be discrete. For brevity, we only consider calculating the partition function. For space reasons, we refer to \cite{meirom2022optimizing} for background on tensors, \acp{TN}, and \ac{TN} contraction.

We denote RVs by uppercase letters \(X\), their domain by \(\text{Dom}(X)\), and their values by lowercase letters \(x \in \text{Dom}(X)\). Further, we write $x$ for $X = x$. W.l.o.g.\ we assume $\text{Dom}(X) = \{0,...,m-1\}$ for a \ac{RV} $X$ that can take $m$ values. Boldface uppercase letters \(\boldsymbol{X}\) and boldface lowercase letters \(\boldsymbol{x}\) denote sets of RVs and their joint assignment, respectively.

Let \(\boldsymbol{X}\) be a set of \(n\) \acp{RV}. An \ac{FG} $G = (\boldsymbol{X} \cup \boldsymbol{F}, \boldsymbol{E})$ is an undirected bipartite graph consisting of variable nodes $\boldsymbol{X}$, factor nodes $\boldsymbol{F}$, and edges \(\boldsymbol{E} \subseteq \boldsymbol{X} \times \boldsymbol{F}\). Each variable node represents an \ac{RV} \(X \in \boldsymbol{X}\) and each factor node a non-negative function \(\phi: \boldsymbol{Y} \rightarrow \mathbb{R}_+\) over a subset \(\boldsymbol{Y} \subseteq \boldsymbol{X}\) mapping assignments \(\boldsymbol{y} \in \text{Dom}(\boldsymbol{Y})\) to potentials \(\phi(\boldsymbol{y}) \in \mathbb{R}_+\).

A factor node is connected to a variable node if the \ac{RV} appears as an argument in the factor (see \cref{fig:example_fg1}). The graph structure encodes a joint probability distribution \(P(\boldsymbol{X})\) by
\begin{align*}
P(\boldsymbol{X}) = \frac{1}{Z(G)}\prod_{i=1}^{k}\phi_{i}(\boldsymbol{Y}_i),
\end{align*}
with \(Z(G)\) called the partition function of \(G\).
Instead of operating on the exponentially large distribution \(P(\boldsymbol{X})\), inference is performed on the factorised representation using the following two operations, multiplication of factors and sum out of \acp{RV}. For \(\boldsymbol{U},\boldsymbol{V} \subseteq \boldsymbol{X},\) multiplying two factors \(\phi_i(\boldsymbol{U})\) and \(\phi_j(\boldsymbol{V})\) results in a new factor \(\phi_k(\boldsymbol{W}),\) where \(\boldsymbol{W} = \boldsymbol{U} \cup \boldsymbol{V}\) and \(\phi_k(\boldsymbol{w}) = \phi_i(\boldsymbol{u}) \cdot \phi_j(\boldsymbol{v})\) with \(\pi_{\boldsymbol{U}}(\boldsymbol{w}) = \boldsymbol{u}\) and \(\pi_{\boldsymbol{V}}(\boldsymbol{w}) = \boldsymbol{v}\) (\(\pi_{(\boldsymbol{\cdot})}\) projects its input to the subscript \acp{RV}). For a factor \(\phi(\boldsymbol{Y}) \in \boldsymbol{F}\), summing out \acp{RV} \(\boldsymbol{Y}_1 \subseteq \boldsymbol{Y}\) yields a new factor \(\phi'(\boldsymbol{Y} \backslash \boldsymbol{Y}_1) = \sum_{\boldsymbol{Y}_1}\phi(\boldsymbol{Y})\).
The following theorems are used to sum out \acp{RV} efficiently.
\begin{theorem}
\label{theorem:VESumOut1}
For factor \(\phi(\boldsymbol{Y})\) with \(\boldsymbol{Y}_1\),\(\boldsymbol{Y}_2 \subseteq \boldsymbol{Y}\), and \(\boldsymbol{Y}_1 \cap \boldsymbol{Y}_2 = \emptyset\), we have
\(\sum_{\boldsymbol{Y}_1}\sum_{\boldsymbol{Y}_2}\phi(\boldsymbol{Y}) = \sum_{\boldsymbol{Y}_2}\sum_{\boldsymbol{Y}_1}\phi(\boldsymbol{Y})\).
\end{theorem}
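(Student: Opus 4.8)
The plan is to reduce the claim to the elementary fact that a finite double sum of real numbers may be reordered, which holds by the commutativity and associativity of addition in $\mathbb{R}$. The hypothesis $\boldsymbol{Y}_1 \cap \boldsymbol{Y}_2 = \emptyset$ is precisely what guarantees that the two summation indices range over independent domains, so that the nested sums are well-defined and decouple into a single sum over a Cartesian product; without disjointness, summing out $\boldsymbol{Y}_1$ and then $\boldsymbol{Y}_2$ would not even be meaningful.

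First I would fix an arbitrary assignment $\boldsymbol{z} \in \text{Dom}(\boldsymbol{Y} \setminus (\boldsymbol{Y}_1 \cup \boldsymbol{Y}_2))$ to the remaining \acp{RV} and argue that it suffices to prove equality of both sides, each being a factor, when evaluated at $\boldsymbol{z}$. Unfolding the definition of summing out, the left-hand side evaluated at $\boldsymbol{z}$ equals $\sum_{\boldsymbol{y}_1 \in \text{Dom}(\boldsymbol{Y}_1)} \sum_{\boldsymbol{y}_2 \in \text{Dom}(\boldsymbol{Y}_2)} \phi(\boldsymbol{y})$, where $\boldsymbol{y}$ is the unique joint assignment to $\boldsymbol{Y}$ satisfying $\pi_{\boldsymbol{Y}_1}(\boldsymbol{y}) = \boldsymbol{y}_1$, $\pi_{\boldsymbol{Y}_2}(\boldsymbol{y}) = \boldsymbol{y}_2$, and $\pi_{\boldsymbol{Y} \setminus (\boldsymbol{Y}_1 \cup \boldsymbol{Y}_2)}(\boldsymbol{y}) = \boldsymbol{z}$. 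Because the index sets are disjoint, this assignment is well-defined and $(\boldsymbol{y}_1, \boldsymbol{y}_2)$ ranges over the product $\text{Dom}(\boldsymbol{Y}_1) \times \text{Dom}(\boldsymbol{Y}_2)$.

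Next I would set $a_{\boldsymbol{y}_1, \boldsymbol{y}_2} := \phi(\boldsymbol{y})$ and observe that the left-hand side is $\sum_{\boldsymbol{y}_1} \sum_{\boldsymbol{y}_2} a_{\boldsymbol{y}_1, \boldsymbol{y}_2}$ while the right-hand side is $\sum_{\boldsymbol{y}_2} \sum_{\boldsymbol{y}_1} a_{\boldsymbol{y}_1, \boldsymbol{y}_2}$. Since all domains are finite (the \acp{RV} are discrete), both expressions equal the single sum of the finitely many summands $a_{\boldsymbol{y}_1, \boldsymbol{y}_2}$ over the product index set, so they coincide. As $\boldsymbol{z}$ was arbitrary, the two factors agree everywhere and hence are equal. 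The argument has no genuinely hard step; the only point requiring care is the bookkeeping in the middle paragraph, namely checking that disjointness makes $(\boldsymbol{y}_1, \boldsymbol{y}_2) \mapsto \boldsymbol{y}$ a bijection onto the appropriate slice of $\text{Dom}(\boldsymbol{Y})$, which I expect to be the main --- and essentially only --- obstacle.
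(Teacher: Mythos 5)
Your proof is correct. Note that the paper itself gives no proof of this theorem --- it is stated as a standard preliminary fact (the usual commutativity of sum-out in variable elimination) --- so there is nothing to diverge from; your argument, fixing an assignment $\boldsymbol{z}$ to the residual variables and reducing both sides to a single finite sum over $\text{Dom}(\boldsymbol{Y}_1) \times \text{Dom}(\boldsymbol{Y}_2)$ via the disjointness-induced bijection, is exactly the elementary justification the paper implicitly relies on, and the bookkeeping step you flag (well-definedness of $(\boldsymbol{y}_1,\boldsymbol{y}_2) \mapsto \boldsymbol{y}$) is handled correctly.
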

\begin{theorem}
\label{theorem:VESumOut2}
For factors \(\phi_i(\boldsymbol{U}),\phi_j(\boldsymbol{V})\) with \(\boldsymbol{W} \subseteq \boldsymbol{V}, \boldsymbol{W} \cap \boldsymbol{U} = \emptyset\), we have
\(\sum_{\boldsymbol{W}} \phi_i(\boldsymbol{U}) \cdot \phi_j(\boldsymbol{V}) = \phi_i(\boldsymbol{U}) \cdot \sum_{\boldsymbol{W}} \phi_j(\boldsymbol{V})\).
\end{theorem}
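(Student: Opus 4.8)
The plan is to establish the identity by verifying that its two sides are equal as factors, i.e., that they share a common scope and assign the same potential to every joint assignment of that scope. First I would pin down this scope. By the definition of factor multiplication, the product $\phi_i(\boldsymbol{U}) \cdot \phi_j(\boldsymbol{V})$ is a factor over $\boldsymbol{U} \cup \boldsymbol{V}$, so summing out $\boldsymbol{W} \subseteq \boldsymbol{V}$ on the left leaves a factor over $(\boldsymbol{U} \cup \boldsymbol{V}) \setminus \boldsymbol{W} = \boldsymbol{U} \cup (\boldsymbol{V} \setminus \boldsymbol{W})$, where the last equality uses $\boldsymbol{W} \cap \boldsymbol{U} = \emptyset$ and $\boldsymbol{W} \subseteq \boldsymbol{V}$. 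On the right, $\sum_{\boldsymbol{W}} \phi_j(\boldsymbol{V})$ is a factor over $\boldsymbol{V} \setminus \boldsymbol{W}$, and multiplying by $\phi_i(\boldsymbol{U})$ yields a factor over $\boldsymbol{U} \cup (\boldsymbol{V} \setminus \boldsymbol{W})$ as well; hence the scopes agree.

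Next I would fix an arbitrary assignment $\boldsymbol{t} \in \text{Dom}(\boldsymbol{U} \cup (\boldsymbol{V} \setminus \boldsymbol{W}))$ and compute the left-hand side at $\boldsymbol{t}$. Expanding the sum-out explicitly gives $\sum_{\boldsymbol{w} \in \text{Dom}(\boldsymbol{W})} \phi_i(\pi_{\boldsymbol{U}}(\boldsymbol{t} \cup \boldsymbol{w})) \cdot \phi_j(\pi_{\boldsymbol{V}}(\boldsymbol{t} \cup \boldsymbol{w}))$, where $\boldsymbol{t} \cup \boldsymbol{w}$ is a well-defined assignment of $\boldsymbol{U} \cup \boldsymbol{V}$: since $\boldsymbol{W} \subseteq \boldsymbol{V}$, the assignment $\boldsymbol{t}$ already fixes $\boldsymbol{V} \setminus \boldsymbol{W}$ while $\boldsymbol{w}$ fixes $\boldsymbol{W}$, together determining all of $\boldsymbol{V}$, and $\boldsymbol{W} \cap \boldsymbol{U} = \emptyset$ guarantees no conflict on $\boldsymbol{U}$. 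The key observation is that $\boldsymbol{W} \cap \boldsymbol{U} = \emptyset$ also forces $\pi_{\boldsymbol{U}}(\boldsymbol{t} \cup \boldsymbol{w}) = \pi_{\boldsymbol{U}}(\boldsymbol{t})$, so the factor $\phi_i$ takes the same value in every summand and is constant with respect to the summation index $\boldsymbol{w}$.

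I would then apply the distributive law of multiplication over addition in $\mathbb{R}_+$ to pull this constant out of the sum, obtaining $\phi_i(\pi_{\boldsymbol{U}}(\boldsymbol{t})) \cdot \sum_{\boldsymbol{w} \in \text{Dom}(\boldsymbol{W})} \phi_j(\pi_{\boldsymbol{V}}(\boldsymbol{t} \cup \boldsymbol{w}))$. Identifying the remaining sum as the value of $\sum_{\boldsymbol{W}} \phi_j(\boldsymbol{V})$ at $\pi_{\boldsymbol{V} \setminus \boldsymbol{W}}(\boldsymbol{t})$ and reassembling the product via the multiplication definition, this is precisely the value of the right-hand side at $\boldsymbol{t}$. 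As $\boldsymbol{t}$ was arbitrary, the two factors coincide.

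I expect the genuine obstacle to be notational rather than conceptual: the mathematical content is just the elementary distributive law, so the work lies entirely in handling the projection operators correctly and confirming that the combined assignment $\boldsymbol{t} \cup \boldsymbol{w}$ is well defined and exhausts $\boldsymbol{U} \cup \boldsymbol{V}$. Both of these hinge on the two hypotheses $\boldsymbol{W} \subseteq \boldsymbol{V}$ and $\boldsymbol{W} \cap \boldsymbol{U} = \emptyset$, which is why I would highlight each place they are used.
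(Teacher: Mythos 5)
Your proof is correct: the scope-matching argument and the observation that \(\boldsymbol{W} \cap \boldsymbol{U} = \emptyset\) makes \(\phi_i\)'s value constant across the summands, so it can be pulled out by distributivity, is exactly the standard argument for this identity. Note that the paper itself states \cref{theorem:VESumOut2} without proof, as classical background on variable elimination, so there is no in-paper proof to compare against; your write-up would serve as a complete and correct proof of it.
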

Calculating \(Z(G)\) requires summing out all \(n\) \acp{RV}. \Cref{theorem:VESumOut1} implies that \acp{RV} can be summed out in any order, while \Cref{theorem:VESumOut2} yields that summing out any chosen \ac{RV} \(X\) requires multiplying only factors in which \(X\) appears as an argument. This multiplication  creates an intermediate result factor with a size being exponential in the number of all distinct \acp{RV} included during multiplication. For inference, the \ac{VE} algorithm \cite{zhang1996exploiting} multiplies factors successively to sum out \acp{RV} one by one. The efficiency clearly depends on the order in which \acp{RV} are summed out, as different orders result in different intermediate factor sizes. However, finding an optimal order is NP-hard \cite{koller2009probabilistic} and heuristics are used instead.

\begin{figure}[t]
\centering
\subfloat[]{%
\label{fig:example_fg1}%
\includegraphics[scale=0.5]{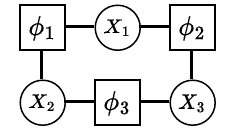}%
}
\subfloat[]{%
\label{fig:example_tn}%
\includegraphics[scale=0.5]{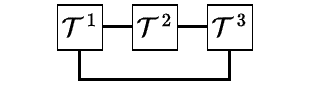}%
}
\subfloat[]{%
\label{fig:example_fg2}%
\includegraphics[scale=0.5]{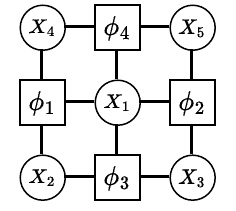}%
}
\caption{\textbf{(a)} A factor graph representing a joint distribution \(P(X_1,X_2,X_3)\). \textbf{(b)} A \ac{TN} with three shared indices and three tensors (labels for indices are omitted). \textbf{(c)} A factor graph for which there is no \ac{TN} with a one-to-one correspondence.}
\label{fig:exemplary_fg_and_tn}
\end{figure}

\begin{table}
    \centering
    \resizebox{0.29\textwidth}{!}{%
    \begin{tabular}{c |c}
        \hline
        \textbf{\ac{PGM}} & \textbf{\ac{TN}} \\
        \hline
        \ac{RV} & Index \\
        Factor & Tensor  \\
        Sum Out & Contraction \\
        Factor Multiplication & Tensor Product \\
        Domain Size & Dimension Size \\
        \hline
    \end{tabular}
    }
    \caption{Terminology for \acp{PGM} and \acp{TN}.}
\label{table: terminology_fg_tn}
\end{table}

\section{Duality between PGMs and TNs}
Since discrete joint distributions can be interpreted as tensors, there exist a duality between \acp{PGM} and \acp{TN} \cite{robeva2019duality}. \acp{PGM} encode joint distribution as factorisations, while \acp{TN} encode tensors in terms of tensor contractions. To illustrate the correspondence, consider \cref{fig:example_fg1,fig:example_tn} depicting an \ac{FG} with factors $\phi_1(X_1,X,2),\phi_2(X_1, X_3),\phi_3(X_2,X_3),$ and an \ac{TN} with tensors $\mathcal{T}^1, \mathcal{T}^2, \mathcal{T}^3$. An overview of the respective terminology is given in \cref{table: terminology_fg_tn}. Each factor node $\phi_i$ corresponds to a tensor node $\mathcal{T}_i$ and each \ac{RV} $X_i$ to an edge since edges represent indices. A factor's size is given by the product of the domain sizes of its \acp{RV}, while a tensor's size is given by the product of the dimensions of its indices. Contracting an index that connects two tensors in the \ac{TN} requires that we first multiply the corresponding tensors, which results in a new tensor that includes all joint indices, and then sum over the shared index. This translates in the \ac{FG} to summing out a \ac{RV} by multiplying all factors containing the \ac{RV} followed by summing over the assignments of the \ac{RV}. Consequently, contracting all shared indices in \cref{fig:example_tn} (which is referred to as \ac{TN} contraction) coincides with calculating the partition function in \cref{fig:example_fg1}. Note that the usual definition of \acp{TN} restricts indices to appear in at most two tensors. In contrast, \acp{RV} can appear in arbitrary many factors. Consequently, not every \ac{FG} has a one-to-one correspondence to a \ac{TN} (see \cref{fig:example_fg2}). The correspondence between \acp{PGM} and \acp{TN} allows for adapting the \ac{MDP} formulation presented in \cite{meirom2022optimizing} to the problem of finding an optimal elimination order. 

\section{MDP Formulation \& RL Setting}
For brevity, we refer to \cite{van2012reinforcement} for background on \acp{MDP} and \ac{RL}. Analogous to the problem of finding an optimal contraction order in \acp{TN}, we consider the problem of finding an optimal elimination order in \acp{FG} and formulate it as an MDP.
For an \ac{FG} $G=(\boldsymbol{X} \cup \boldsymbol{F}, \boldsymbol{E})$ with $n$ \acp{RV} $\boldsymbol{X} = \{X_1,\dots,X_n\}$, an elimination order $\rho = (X_{i_1},\dots,X_{i_n})$ defines in which order we sum out \acp{RV} to calculate the partition function. Summing out an \ac{RV} $X \in \boldsymbol{X}$ creates an intermediate factor $\phi^*(\boldsymbol{X}^*\backslash\{X\}) = \prod_{\phi(\boldsymbol{X}_i) \in \boldsymbol{F}_{X}} \phi(\boldsymbol{X}_i)$ with $\boldsymbol{F}_X = \{\phi(\boldsymbol{X}_i) \in \boldsymbol{F} \mid X \in \boldsymbol{X}_i\}$ and $\boldsymbol{X}^* = \bigcup \boldsymbol{X}_i$. Since a factor over \acp{RV} $\boldsymbol{X}^*$ needs to be created to obtain $\phi^*(\boldsymbol{X}^*\backslash\{X\})$, we generally define the sum-out cost as
$c(X)= |\text{Dom}(\boldsymbol{X}^*)|$
and the total elimination order cost as
$c(\rho) = \sum_{j=1}^{n-1} c(X_j)$.
Please note that we obtain different sets of \acp{RV} $\boldsymbol{X}^*$ for different orders.
Similarly, we aim for an order $\rho^{\text{min}}$ of minimal cost, i.e.,
\begin{align}
    \mathcal{\rho}^{\text{min}}(G) = \text{argmin}_{\rho}~c(\rho).
\end{align}
Having defined the problem of finding an optimal elimination order, we analogously formulate the corresponding \ac{MDP} $(\mathsf{S}, \mathsf{A}, \mathsf{F}, \mathsf{R})$: The state space $\mathsf{S} = \{G=(\boldsymbol{X} \cup \boldsymbol{F}, \boldsymbol{E})\}$ is the space of all \acp{FG}, the actions $\mathsf{A} = \boldsymbol{X}$ is the set of \acp{RV} to sum out, the transition function $\mathsf{F}(G,X)=G'(\boldsymbol{X}' \cup \boldsymbol{F}', \boldsymbol{E}')$ takes the current \ac{FG} and the \ac{RV} to sum out as input and outputs the resulting \ac{FG} $G'$ with $\boldsymbol{X}'= \boldsymbol{X}^*\backslash\{X\}, \boldsymbol{F}'=\boldsymbol{F}\backslash\boldsymbol{F}_X\cup\phi^*(\boldsymbol{X}^*\backslash\{X\})$, and the (immediate) cost function $\mathsf{R}(G,X)=c(X)$ is set to be the cost of summing out $X$ in $G$. With the MDP formulation at hand, the goal is now to learn a policy that chooses actions, i.e., an \ac{RV} to sum out next, s.t. the cumulative sum-out cost is kept minimal. To do so, we can analogously introduce a graph neural network for \ac{RV} selection. Specifically, we use the graph neural network to determine the next \ac{RV} to sum out for a given \ac{FG} to sequentially eliminate all \acp{RV}. After each elimination, we remove the node of the eliminated \ac{RV} and the nodes of the corresponding factors and include a new factor, defined over the \acp{RV} involved, that represents the intermediate result. To calculate the partition function, we continue until all \acp{RV} are eliminated, though it can be adapted to any type of query. After each elimination step, we update the total cost according to the cost function. For space reasons, the detailed description of the \ac{RL} procedure is left for future work. 

We observe that in the current formalism the sum-out cost is defined w.r.t.\ the domain size of $\boldsymbol{X}^*$ as we calculate an intermediate result factor $\phi^*(\boldsymbol{X}^*)$. Unfortunately, this size is exponential in the number of \acp{RV} $|\boldsymbol{X}^*|$. However, if we know beforehand that the intermediate factor has some structure that allows for introducing a compact encoding, we do not have to calculate all potentials, but only those required for the compact encoding. As a consequence, we can reformulate the cost function to not depend exponentially on $|\boldsymbol{X}^*|$ anymore, but rather on the size of the compact encoding. With costs that scale with specific structures, we enable the agent to explore new elimination orders. In the following, we show how local symmetries can be compactly encoded and how probabilistic inference can benefit from these compact encodings to create smaller intermediate factors.

\section{Exploitation of Local Symmetries}
Let $\phi(\boldsymbol{X})$ be a factor over $n$ \acp{RV} $\boldsymbol{X}$. We say that $\phi(\boldsymbol{X})$ has local symmetries if there exist interchangeable \acp{RV} $\boldsymbol{\bar{X}} \subseteq \boldsymbol{X}$, i.e., for any joint assignment $\boldsymbol{x}$ any permutation $\sigma_{\boldsymbol{\bar{x}}}(\boldsymbol{x})$ of assignments $\boldsymbol{\bar{x}} = \pi_{\boldsymbol{\bar{X}}}(\boldsymbol{x})$ does yield the same potential:
\begin{align}
\label{eq:local_symmetry}
\phi(\boldsymbol{x}) = \phi(\sigma_{\boldsymbol{\bar{X}}}(\boldsymbol{x})).
\end{align}
Note that a symmetry requires all \acp{RV} $\bar{X} \in \boldsymbol{\bar{X}}$ to have the same domain size. We say a factor is symmetric if all \acp{RV} $\boldsymbol{X}$ are interchangeable. To illustrate the structure, consider the symmetric factor $\phi$ depicted in \cref{fig:symmetric_factor}. We observe that all assignments with the same number of ones (being permutations of each other) are mapped to identical potentials, yielding four unique potentials in total. Please note that it it possible that multiple symmetries are present, i.e., that the \acp{RV} of multiple disjoint subsets of $\boldsymbol{X}$ are interchangeable within the respective subset. We denote the set of all disjoint subsets of interchangeable \acp{RV} in $\phi(\boldsymbol{X})$ by $\text{Sym}_{\phi}(\boldsymbol{X})$. A symmetric factor yields $\text{Sym}_{\phi}(\boldsymbol{X}) = \{\boldsymbol{X}\}$.

\begin{figure}[t!]
\centering
\resizebox{0.3\textwidth}{!}{%
\subfloat[]{
\label{fig:symmetric_factor}
\begin{tabular}{c c c c}
\toprule
$X_1$ & $X_2$ & $X_3$ & $\phi$ \\
\midrule
 0 & 0 & 0 & $1$ \\
 0 & 0 & 1 & $2$ \\
 0 & 1 & 0 & $2$ \\
 0 & 1 & 1 & $3$ \\
 1 & 0 & 0 & $2$ \\
 1 & 0 & 1 & $3$ \\
 1 & 1 & 0 & $3$ \\
 1 & 1 & 1 & $4$ \\
 \bottomrule
\end{tabular}
}
\qquad
\subfloat[]{
\label{fig:compact_encoding}
\begin{tabular}{c c}
\toprule
$S_{\{X_1,X_2,X_3\}}$ & $\phi^s$ \\
\midrule
 $[0,3]$ & $1$ \\
 $[1,2]$ & $2$ \\
 $[2,1]$ & $3$ \\
 $[3,0]$ & $4$ \\
 \bottomrule
\end{tabular}
}
}
\caption{A symmetric factor and its compact encoding.}
\label{fig:symmetric_factor_with_compact_encoding}
\end{figure}

We continue with compactly encoding symmetries within factors. Our encoding is motivated by Counting \acp{RV} which have been introduced in the context of lifted inference \cite{milch2008lifted}. For each set of interchangeable \acp{RV} $\boldsymbol{\bar{X}} \in \text{Sym}_{\phi}(\boldsymbol{X})$, we introduce a \ac{SRV} $S_{\boldsymbol{\bar{X}}}$ whose domain $\text{Dom}(S_{\boldsymbol{\bar{X}}})$ consists of histograms that count how many \ac{RV} are set to specific values. To illustrate those histograms, consider \cref{fig:compact_encoding} which represents the compact encoding $\phi^s$ of the symmetric factor $\phi$ in \cref{fig:symmetric_factor}. Each value of $S_{\{X_1,X_2,X_3\}}$ is a histogram that is a representative for all possible permutations of those values with the respective counts. More specifically, since \acp{RV} $X_1,X_2,X_3$ are Boolean, the histograms are of length two with the first entry counting zero assignments and the second entry counting one assignments. We observe that the four entries in $\phi^s$ suffice to represent all potentials in $\phi$. Consequently, taking symmetries into account enables reducing a factor's size drastically. By means of combinatorics, it follows immediately that 
\begin{align}
|\text{Dom}(S_{\boldsymbol{\bar{X}}})| = \binom{n'+d'-1}{d'-1},    
\end{align}
with $\bar{n} = |\boldsymbol{\bar{X}}|$ and $\bar{d} = |\text{Dom}(\bar{X})|$ for any $\bar{X} \in \boldsymbol{\bar{X}}$.

Next, we show how to perform the common factor operations, namely multiplication of factors and sum-out of \acp{RV}, with the compact encoding. This allows us to take advantage of local symmetries to introduce compact encodings of intermediate result factors during inference. We start with the sum-out operation which preserves symmetries:

\begin{theorem}[\textbf{Sum-Out with Symmetries}]
\label{theorem:symmetric_sum_out}
Let $\phi(\boldsymbol{X})$ be a factor and let $X \in \boldsymbol{X}, \boldsymbol{X}'=\boldsymbol{X}\backslash\{X\}$, and $\phi'(\boldsymbol{X}') = \sum_{X}\phi(\boldsymbol{X})$. Let further $\text{Sym}'_{\phi}(\boldsymbol{X}) = \text{Sym}_{\phi}(\boldsymbol{X})\backslash\{\boldsymbol{\bar{X}}\} \cup \{\boldsymbol{\bar{X}}\backslash\{X\}\}$ if there exists $\boldsymbol{\bar{X}} \in \text{Sym}_{\phi}(\boldsymbol{X})$ with $X \in \boldsymbol{\bar{X}}$. Then, the following holds:
\[ 
\text{Sym}_{\phi'}(\boldsymbol{X}') = 
\begin{cases} 
    \text{Sym}_{\phi}(\boldsymbol{X}) & \text{if } \nexists \boldsymbol{\bar{X}} \in \text{Sym}_{\phi}(\boldsymbol{X}): X \in \boldsymbol{\bar{X}} \\
    \text{Sym}'_{\phi}(\boldsymbol{X})  & \text{else }
\end{cases}
\]
\end{theorem}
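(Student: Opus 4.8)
The plan is to read the claimed identity as two inclusions and to prove the substantive one: that every set listed on the right-hand side is genuinely a group of interchangeable \acp{RV} of $\phi'$. First I fix notation. For an assignment $\boldsymbol{x}'\in\text{Dom}(\boldsymbol{X}')$ and a value $x\in\text{Dom}(X)$, let $(\boldsymbol{x}',x)$ denote the joint assignment to $\boldsymbol{X}$ whose restriction to $\boldsymbol{X}'$ is $\boldsymbol{x}'$ and which sets $X=x$, so that $\phi'(\boldsymbol{x}')=\sum_{x\in\text{Dom}(X)}\phi(\boldsymbol{x}',x)$. By \cref{eq:local_symmetry}, a set $\boldsymbol{\bar{Y}}\subseteq\boldsymbol{X}'$ is a symmetry group of $\phi'$ exactly when $\phi'$ is invariant under every permutation of the coordinates in $\boldsymbol{\bar{Y}}$. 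The whole argument then turns on a single observation about how such permutations interact with the sum over $X$.

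The key observation is that, in both cases, every group $\boldsymbol{\bar{Y}}$ appearing in the claimed set is contained in (or equal to) a genuine symmetry group of $\phi$ and does not contain $X$: in the first case $\boldsymbol{\bar{Y}}\in\text{Sym}_{\phi}(\boldsymbol{X})$ itself with $X\notin\boldsymbol{\bar{Y}}$, and in the second case $\boldsymbol{\bar{Y}}=\boldsymbol{\bar{X}}\backslash\{X\}\subseteq\boldsymbol{\bar{X}}$ where $\boldsymbol{\bar{X}}\in\text{Sym}_{\phi}(\boldsymbol{X})$ is the group that contained $X$. Hence any permutation $\sigma_{\boldsymbol{\bar{Y}}}$ of the coordinates in $\boldsymbol{\bar{Y}}$ fixes the coordinate $X$, so $\sigma_{\boldsymbol{\bar{Y}}}(\boldsymbol{x}',x)=(\sigma_{\boldsymbol{\bar{Y}}}(\boldsymbol{x}'),x)$, and $\sigma_{\boldsymbol{\bar{Y}}}$ is a permutation of coordinates lying inside a genuine symmetry group of $\phi$. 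Applying \cref{eq:local_symmetry} to $\phi$ for each fixed $x$ gives $\phi(\sigma_{\boldsymbol{\bar{Y}}}(\boldsymbol{x}'),x)=\phi(\boldsymbol{x}',x)$, and summing over $x\in\text{Dom}(X)$ yields $\phi'(\sigma_{\boldsymbol{\bar{Y}}}(\boldsymbol{x}'))=\phi'(\boldsymbol{x}')$. Thus every listed group is a symmetry group of $\phi'$. The groups disjoint from the one containing $X$ are carried over verbatim by the $X$-free instance of exactly this computation, and disjointness of the updated collection is immediate because we only shrink or retain existing disjoint blocks.

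I expect the main obstacle to be the reverse inclusion, i.e.\ the claim that $\text{Sym}_{\phi'}(\boldsymbol{X}')$ contains nothing beyond the listed groups. Summing out a variable can enlarge or merge blocks accidentally: two coordinates that are not interchangeable in $\phi$ can become interchangeable in $\phi'$ once their differing contributions are collapsed by the sum over $X$, so a strictly maximal reading of $\text{Sym}$ need not be preserved. I would therefore either (a) prove only soundness, reading the stated equality as the update rule that produces a valid (not necessarily maximal) symmetry decomposition of $\phi'$ — which is all the compact-encoding cost model requires — or (b) assume that $\text{Sym}_{\phi}(\boldsymbol{X})$ is the fixed decomposition being tracked through the elimination, so that the identity is by definition the transported decomposition rather than a statement about maximal symmetry. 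A minor point to tidy up in either case is the degenerate situation where $\boldsymbol{\bar{X}}=\{X\}$ or $\boldsymbol{\bar{X}}\backslash\{X\}$ is a singleton, where the transported block is trivially symmetric and may simply be dropped.
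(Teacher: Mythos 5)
Your core argument is exactly the paper's: for any block $\boldsymbol{\bar{Y}}$ listed on the right-hand side, a permutation of $\boldsymbol{\bar{Y}}$ fixes the coordinate $X$, hence commutes with the sum over $\text{Dom}(X)$, and applying \cref{eq:local_symmetry} to $\phi$ termwise gives $\phi'(\sigma_{\boldsymbol{\bar{Y}}}(\boldsymbol{x}'))=\phi'(\boldsymbol{x}')$; the paper writes precisely this chain of equalities for the first case and dismisses the second case with ``the same argument holds.''

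Your caveat about the reverse inclusion is well-founded, and it is worth knowing that the paper never addresses it: its proof establishes only that the transported blocks remain symmetries of $\phi'$ and then stops, so under a maximal reading of $\text{Sym}$ the claimed \emph{equality} is not proved there either. The reverse inclusion can genuinely fail: with Boolean $X_1,X_2,X$, take $\phi(0,1,0)=1$, $\phi(0,1,1)=3$, $\phi(1,0,0)=2$, $\phi(1,0,1)=2$ (values on $(0,0,\cdot)$ and $(1,1,\cdot)$ arbitrary); then $X_1,X_2$ are not interchangeable in $\phi$, yet $\phi'(0,1)=\phi'(1,0)=4$, so they become interchangeable in $\phi'$. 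Hence your proposed reading (a)/(b) --- the theorem as a \emph{sound} update rule for the symmetry decomposition being tracked, which is all the compact-encoding cost function requires --- is exactly the reading under which the paper's proof, and yours, is complete.
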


\begin{proof}
Summing out \ac{RV} $X$ with $d = |\text{Dom}(X)|$ yields for any $\boldsymbol{x}' \in \text{Dom}(\boldsymbol{X}')$:
\begin{align*}
\phi'(\boldsymbol{x}') = \sum_{i=0}^{d-1}\phi(\boldsymbol{x}',i).        
\end{align*}
For the first case, i.e., where $X$ is not included in any interchangeable subset in $\text{Sym}_{\phi}(\boldsymbol{X})$, we obtain $\forall \boldsymbol{\bar{X}} \in \text{Sym}_{\phi}(\boldsymbol{X})$:
\begin{align*}
     \phi'(\boldsymbol{x}') = \sum_{i=0}^{d-1}\phi(\boldsymbol{x}',i) = \sum_{i=0}^{d-1}\phi(\sigma_{\boldsymbol{\bar{x}}}(\boldsymbol{x}'),i) = \phi'(\sigma_{\boldsymbol{\bar{x}}}(\boldsymbol{x}'))
\end{align*}
for any permutation $\sigma_{\boldsymbol{\bar{X}}}(\boldsymbol{x}')$, meaning that symmetries hold in $\phi'(\boldsymbol{X}')$ as well. It is clear that the same argument holds for the latter case with interchangeable \acp{RV} in $\text{Sym}'_{\phi}(\boldsymbol{X})$.
\end{proof}

An immediate consequence of \cref{theorem:symmetric_sum_out} is that in the presence of symmetries the sum-out operation can be performed with the compact encoding. Most importantly, since the resulting factor preserves symmetries, we only have to calculate as many potentials as required for the resulting compact encoding. We continue with the multiplication of factors with symmetries:

\begin{theorem}[\textbf{Multiplication with Symmetries}]
\label{theorem:symmetric_mult}
Let $\phi_1(\boldsymbol{X}_1), \phi_2(\boldsymbol{X}_2), \phi_3(\boldsymbol{X}_3)$ be factors with $\phi_3(\boldsymbol{X}_3) = \phi_1(\boldsymbol{X}_1) \cdot \phi_2(\boldsymbol{X}_2)$. Then, the following holds for $i,j \in \{1,2\}$ with $i \neq j$:
\begin{enumerate}[(i)]
    \item $\boldsymbol{\bar{X}}_i \in \text{Sym}_{\phi_i}(\boldsymbol{X}_i) \Rightarrow$ $\boldsymbol{\bar{X}}_i\backslash(\boldsymbol{\bar{X}}_i \cap \boldsymbol{X}_j) \in \text{Sym}_{\phi_3}(\boldsymbol{X}_3)$
    \item $\boldsymbol{\bar{X}}_i \in \text{Sym}_{\phi_i}(\boldsymbol{X}_i) \land \exists \boldsymbol{\bar{X}}_j \in \text{Sym}_{\phi_j}(\boldsymbol{X}_j): \boldsymbol{\bar{X}}_i \cap \boldsymbol{\bar{X}}_j \neq \emptyset\\ \Rightarrow \boldsymbol{\bar{X}}_i \cap \boldsymbol{\bar{X}}_j \in \text{Sym}_{\phi_3}(\boldsymbol{X}_3)$
\end{enumerate}
\end{theorem}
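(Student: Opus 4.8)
The plan is to reduce both claims to the defining invariance in \cref{eq:local_symmetry}, applied to the product factor, using the pointwise definition of factor multiplication $\phi_3(\boldsymbol{x}_3) = \phi_1(\pi_{\boldsymbol{X}_1}(\boldsymbol{x}_3)) \cdot \phi_2(\pi_{\boldsymbol{X}_2}(\boldsymbol{x}_3))$. Without loss of generality I fix $i=1, j=2$. I would first isolate two auxiliary observations and state them as explicit sublemmas. Observation (a): a subset of an interchangeable set is itself interchangeable, since any permutation acting only within $\boldsymbol{\hat{X}} \subseteq \boldsymbol{\bar{X}}$ is the special case of a permutation of $\boldsymbol{\bar{X}}$ that fixes its complement, so \cref{eq:local_symmetry} for $\boldsymbol{\bar{X}}$ yields it for $\boldsymbol{\hat{X}}$. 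Observation (b): how $\sigma_{\boldsymbol{\bar{X}}}$ interacts with projection — if $\boldsymbol{\bar{X}} \cap \boldsymbol{X}_k = \emptyset$ then $\pi_{\boldsymbol{X}_k}(\sigma_{\boldsymbol{\bar{X}}}(\boldsymbol{x}_3)) = \pi_{\boldsymbol{X}_k}(\boldsymbol{x}_3)$, while if $\boldsymbol{\bar{X}} \subseteq \boldsymbol{X}_k$ then projection commutes with the permutation, $\pi_{\boldsymbol{X}_k}(\sigma_{\boldsymbol{\bar{X}}}(\boldsymbol{x}_3)) = \sigma_{\boldsymbol{\bar{X}}}(\pi_{\boldsymbol{X}_k}(\boldsymbol{x}_3))$. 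The domain-size precondition for symmetry is inherited automatically, as every candidate set in the conclusions is a subset of some $\boldsymbol{\bar{X}}$ whose members already share a common domain size.

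For (i), set $\boldsymbol{\hat{X}} = \boldsymbol{\bar{X}}_1 \setminus (\boldsymbol{\bar{X}}_1 \cap \boldsymbol{X}_2) = \boldsymbol{\bar{X}}_1 \setminus \boldsymbol{X}_2$, the RVs of the symmetric set $\boldsymbol{\bar{X}}_1$ not shared with $\phi_2$. Take any $\boldsymbol{x}_3$ and any permutation $\sigma_{\boldsymbol{\hat{X}}}$. Since $\boldsymbol{\hat{X}} \cap \boldsymbol{X}_2 = \emptyset$, observation (b) gives $\pi_{\boldsymbol{X}_2}(\sigma_{\boldsymbol{\hat{X}}}(\boldsymbol{x}_3)) = \pi_{\boldsymbol{X}_2}(\boldsymbol{x}_3)$, so the $\phi_2$ factor is unchanged. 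Since $\boldsymbol{\hat{X}} \subseteq \boldsymbol{\bar{X}}_1 \subseteq \boldsymbol{X}_1$, observation (b) gives $\pi_{\boldsymbol{X}_1}(\sigma_{\boldsymbol{\hat{X}}}(\boldsymbol{x}_3)) = \sigma_{\boldsymbol{\hat{X}}}(\pi_{\boldsymbol{X}_1}(\boldsymbol{x}_3))$, and observation (a), applied with $\boldsymbol{\hat{X}} \subseteq \boldsymbol{\bar{X}}_1$, makes $\phi_1$ invariant under this permutation. Multiplying the two unchanged factors yields $\phi_3(\sigma_{\boldsymbol{\hat{X}}}(\boldsymbol{x}_3)) = \phi_3(\boldsymbol{x}_3)$, establishing $\boldsymbol{\hat{X}} \in \text{Sym}_{\phi_3}(\boldsymbol{X}_3)$.

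For (ii), set $\boldsymbol{\tilde{X}} = \boldsymbol{\bar{X}}_1 \cap \boldsymbol{\bar{X}}_2$, nonempty by hypothesis and contained in both $\boldsymbol{X}_1$ and $\boldsymbol{X}_2$. For any $\boldsymbol{x}_3$ and any $\sigma_{\boldsymbol{\tilde{X}}}$, observation (b) applies to both factors simultaneously, giving $\pi_{\boldsymbol{X}_1}(\sigma_{\boldsymbol{\tilde{X}}}(\boldsymbol{x}_3)) = \sigma_{\boldsymbol{\tilde{X}}}(\pi_{\boldsymbol{X}_1}(\boldsymbol{x}_3))$ and likewise for $\boldsymbol{X}_2$. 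Because $\boldsymbol{\tilde{X}} \subseteq \boldsymbol{\bar{X}}_1$ and $\boldsymbol{\tilde{X}} \subseteq \boldsymbol{\bar{X}}_2$, observation (a) renders both $\phi_1$ and $\phi_2$ invariant under the respective permuted projections, so the product is again unchanged and $\boldsymbol{\tilde{X}} \in \text{Sym}_{\phi_3}(\boldsymbol{X}_3)$.

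I expect the only real obstacle to be careful bookkeeping rather than conceptual difficulty: precisely, verifying that a single global permutation $\sigma_{\boldsymbol{\tilde{X}}}$ of the shared RVs acts consistently across both projections (so that the two invariances may be invoked at once), and rigorously justifying observation (a), namely that restricting a full permutation of $\boldsymbol{\bar{X}}$ to a subset remains a legitimate symmetry permutation. Both reduce to unwinding the definitions of $\pi_{(\boldsymbol{\cdot})}$ and $\sigma_{\boldsymbol{\bar{X}}}$, but they are exactly where a sloppy argument could conceal an error, which is why I would pin down observations (a) and (b) before assembling the two cases.
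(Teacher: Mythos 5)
Your proof is correct and takes essentially the same route as the paper's: for each case you fix the candidate set ($\boldsymbol{\bar{X}}_i \setminus \boldsymbol{X}_j$ in (i), $\boldsymbol{\bar{X}}_i \cap \boldsymbol{\bar{X}}_j$ in (ii)), expand $\phi_3$ pointwise via the projections, and observe that the permutation leaves $\phi_j$'s argument untouched in case (i) while both factors are individually invariant in case (ii). The only difference is presentational: you isolate as explicit sublemmas (subset interchangeability and the commutation of permutation with projection) the facts that the paper's proof invokes silently.
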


\begin{proof}
For assignments $\boldsymbol{x}_3 \in \boldsymbol{X}_3$ we have
\begin{align*}
    \phi_3(\boldsymbol{x}_3) = \phi_i(\boldsymbol{x}_{3i})\cdot\phi_j(\boldsymbol{x}_{3j}),
\end{align*}
with $\boldsymbol{x}_{3i} = \pi_{\boldsymbol{X}_i}(\boldsymbol{x}_3), \boldsymbol{x}_{3j} = \pi_{\boldsymbol{X}_j}(\boldsymbol{x}_3)$. For case (i), let $\boldsymbol{\bar{X}}_i' = \boldsymbol{\bar{X}}_i\backslash(\boldsymbol{\bar{X}}_i \cap \boldsymbol{X}_j)$ for any $\boldsymbol{\bar{X}}_i \in \text{Sym}_{\phi_i}(\boldsymbol{X}_i)$. Since $\boldsymbol{\bar{X}}'_i$ does not contain any \acp{RV} in $\boldsymbol{X}_j$, we obtain
\begin{align*}
    \phi_3(\boldsymbol{x}_3) &= \phi_i(\boldsymbol{x}_{3i})\cdot\phi_j(\boldsymbol{x}_{3j}) \\&= \phi_i(\sigma_{\boldsymbol{\bar{x}}'_i}(\boldsymbol{x}_{3i}))\cdot\phi_j(\boldsymbol{x}_{3j}) \\&= \phi_3(\sigma_{\boldsymbol{\bar{x}}'_i}(\boldsymbol{x}_3)),
\end{align*}
 for any permutation $\sigma_{\boldsymbol{\bar{x}}_i'}(\boldsymbol{x}_3)$. For case (ii), let $\boldsymbol{\bar{X}}_{ij} = \boldsymbol{\bar{X}}_i \cap \boldsymbol{\bar{X}}_j$ with $\boldsymbol{\bar{X}}_i \cap \boldsymbol{\bar{X}}_j \neq \emptyset$ for any two sets $\boldsymbol{\bar{X}}_i \in \text{Sym}_{\phi_i}(\boldsymbol{X}_i), \boldsymbol{\bar{X}}_j \in \text{Sym}_{\phi_j}(\boldsymbol{X}_j)$. Then, we obtain 
 \begin{align*}
    \phi_3(\boldsymbol{x}_3) &= \phi_i(\boldsymbol{x}_{3i})\cdot\phi_j(\boldsymbol{x}_{3j}) \\&= \phi_i(\sigma_{\boldsymbol{\bar{x}}_{ij}}(\boldsymbol{x}_{3i}))\cdot\phi_j(\sigma_{\boldsymbol{\bar{x}}_{ij}}(\boldsymbol{x}_{3j})) \\&= \phi_3(\sigma_{\boldsymbol{\bar{x}}_{ij}}(\boldsymbol{x}_3)),
 \end{align*}
 for any permutation $\sigma_{\boldsymbol{\bar{x}}_{ij}}(\boldsymbol{x}_3)$.
\end{proof}
\cref{theorem:symmetric_mult} provides information on which symmetries remain after multiplication, allowing us to determine the compact encoding of the resulting factor.

\begin{figure}[t!]
\centering
\includegraphics[scale=0.5]{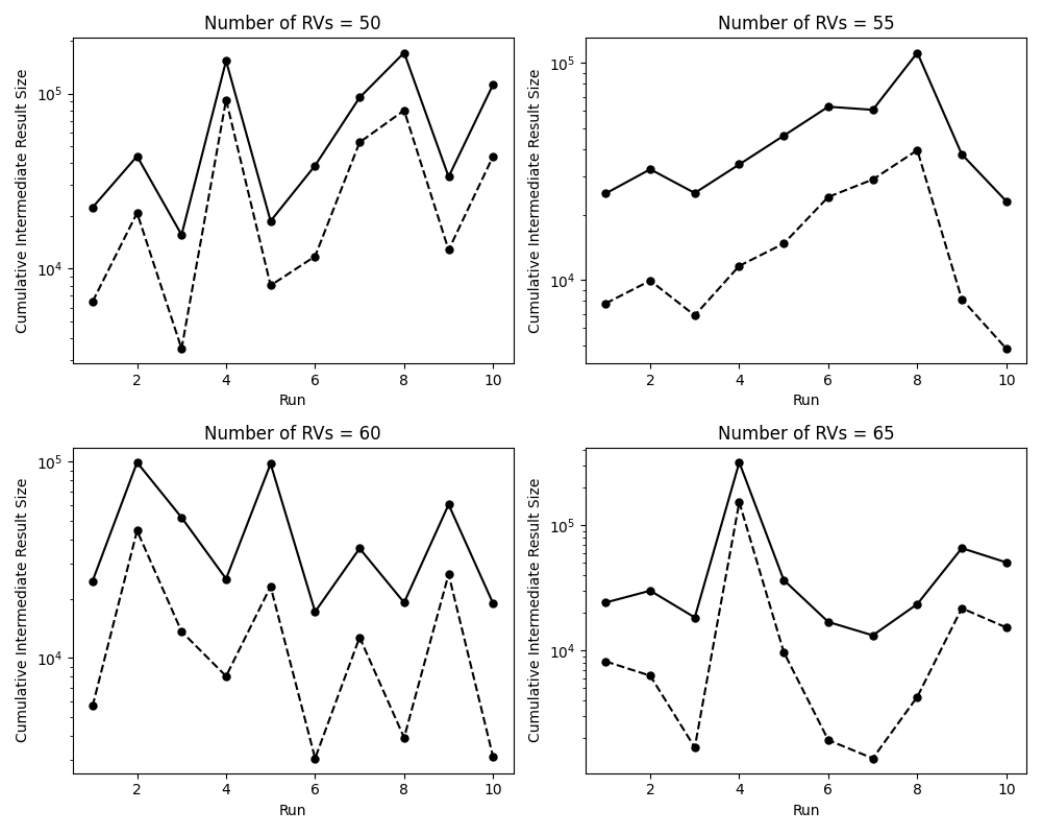}%
\caption{Cumulative intermediate result sizes for \ac{VE} without (solid line) and with (dashed line) compact encodings based on the presence of local symmetries.}
\label{fig:sym_eval}
\end{figure}

To show how exploiting local symmetries affects the cumulative costs, we have experimentally evaluated the cumulative intermediate result sizes with and without compact encodings for \ac{VE}. The results are shown in \cref{fig:sym_eval}. We have created $10$ random \acp{FG} for four settings each with $10$ symmetric factors and $50$/$55$/$60$/$65$ Boolean \acp{RV}. Each factor contains between $5$ and $10$ \acp{RV}. For each \ac{FG} (run), we have calculated the partition function where we select the elimination order based on the factor product that creates the smallest number of \acp{RV}. We clearly observe in all settings that structure exploitation allows for significantly reducing the total costs for specific models. With redefined costs based on compact encodings that scale with different properties rather than the number of \acp{RV}, we enable the agent to consider new elimination orders which previously might not have been considered due to higher costs. Please note that considering compact encodings does always reduce or retain a sum-out cost, meaning that in the worst-case (i.e., if no local symmetries are present) we obtain the same costs as without considering them.
\section{Conclusion}
We have adapted a recently introduced \ac{RL} approach for finding optimal contraction orders in \acp{TN} to the problem of finding optimal elimination orders for probabilistic inference in \acp{PGM}. We further incorporated the exploitation of local symmetries which allows for introducing compact encodings for intermediate result factors, enabling the agent to explore new elimination orders.
There exist many other interesting compact encodings such as low-rank tensor decompositions that can be taken into account for the cost function. Further, one can reformulate the \ac{RL} setting to consider evidence as well as other types of queries.

%% The file named.bst is a bibliography style file for BibTeX 0.99c
\bibliographystyle{named}
\bibliography{ijcai24}

\end{document}